\documentclass[conference]{IEEEtran}


\usepackage{graphics} 
\usepackage{epsfig} 
\usepackage{mathptmx} 
\usepackage{times} 
\usepackage{amsmath} 
\usepackage{amssymb}  
\usepackage{epstopdf}
\usepackage{color,colortab}
\usepackage{graphicx}
\usepackage{subcaption}
\usepackage{verbatim}
\usepackage{xcolor}

\newtheorem{thm}{Theorem}
\newtheorem{proof}{proof}
\newtheorem{rem}{Remark}

\linespread{1}

\title{\LARGE \bf
A variable rest length impedance grasping strategy in the port-Hamiltonian framework}


\begin{document}

\title{A variable rest length impedance grasping strategy in the port-Hamiltonian framework}

\author{
\IEEEauthorblockN{Mauricio Mu\~noz-Arias}
\IEEEauthorblockA{
\textit{Faculty of Science and Engineering}\\
\textit{University of Groningen}\\
Groningen, The Netherlands \\
m.munoz.arias@rug.nl
}
\and
\IEEEauthorblockN{Jacquelien M.A. Scherpen}
\IEEEauthorblockA{
\textit{Faculty of Science and Engineering}\\
\textit{University of Groningen}\\
Groningen, The Netherlands \\
j.m.a.scherpen@rug.nl
}
\and
\IEEEauthorblockN{Alessandro Macchelli}
\IEEEauthorblockA{
\textit{Department of Electrical, Electronic,}\\
\textit{and Information Engineering,}\\
\textit{University of Bologna.}\\
Gviale Risorgimento 2, 40136 Bologna, Italy. \\
m.munoz.arias@rug.nl
}
}

\maketitle

\begin{abstract}
This work is devoted to an impedance grasping strategy for a class
of standard mechanical systems in the port-Hamiltonian framework. We embed a variable rest-length of the springs of the existing impedance grasping strategy in order to achieve a stable non-contact to contact transition, and a desired grasping force. We utilize the port-Hamiltonian structure of standard mechanical systems. 
First, we utilize a change of variables that transforms the port-Hamiltonian 
system into one with constant mass-inertia matrix.
We then achieve impedance grasping control via a \emph{virtual
spring} with a variable rest-length.
The force that is exerted by the virtual spring 
leads to a dissipation term in the impedance grasping controller,
which is needed to obtain a smoother non-contact
to contact transition. 
Simulations and experimental results are given in order to motivate
our results.
\end{abstract}

\section{Introduction}

In order to perform complex robotic tasks involving the interaction
of the end-effector and an external environment, a strategy with dexterous
manipulation is required. 
An impedance grasping strategy represents a suitable solution for mechanical systems 
which means the capability to constrain objects with an end-effector (gripper) 
\cite{Hogan,Siciliano,Spongelal2006}.

A conventional impedance control strategy in the Euler-Lagrange (EL) framework
is a feedback transformation such that the closed-loop system is equivalent
to a mechanical system with a desired behavior, 
\cite{Canudas,Hogan,Spongelal2006}.
More recently in the EL framework, 
the concept of contact estimation in order to improve the classical results of impedance
control is introduced in \cite{Diolaitietal2005}, 
and kinematic redundancy for safe interaction of the robot system with the environment
is given by \cite{Sadeghianetal2012}. 
Passivity-based control in the EL framework 
is based on selecting a storage energy function,
which ensures the desired behavior between the environment
and the mechanical system. 
However, 
the desired storage function does not qualify as an energy function 
in any meaningful physical sense as stated in \cite{Canudas,OrtegaLoriaNicklassonSira}. 
In comparison with the EL framework, 
the port-Hamiltonian (PH) framework has cleaner tuning opportunities, 
resulting in a better performance \cite{Duindam,Fujimoto2001,Arjan}. 
An impedance grasping control approach in the PH framework is given by \cite{Stramigioli2001}. 
This approach introduces the concept of \emph{virtual object},
and its interconnection with the end-effector
and the environment via \emph{configuration springs}.
The grasping of the real object (environment) is obtained via an indirect
control of the position of the virtual object. 
In \cite{Stramigioli2001} actual contact points and measuring of contact forces
are not considered for embedding in an impedance control strategy. 
In addition, an impedance control design methodology in the PH framework 
with Casimir functions is proposed by \cite{Sakai2012},
where the input of the mechanical system is different from the standard case, 
i.e., it is not a torque but a fluid flow.
{\color{blue}
More recently, in \cite{angerer2017port} a variable stiffness coefficient for virtual springs is the key strategy for grasping and manipulation via a port-Hamiltonian framework. 
Furthermore, an energy-balancing passivity-based impedance control strategy is applied to an unmanned aerial vehicle in \cite{rashad2019port} where both motion and interation control is achieved with the key playing role of a virtual spring stiffness.}

The main contribution of this paper is a novel impedance
grasping control strategy for mechanical systems to improve the transient behavior in non-contact to contact transitions. 
We realize a passivity-based control strategy, 
and a type of modified integral control action which can be interpreted as a . 
a virtual spring  with a variable  \emph{rest-length}. 
This means that we can shape the potential energy relative to a grasping force. 
We achieve this by 
a coordinate transformation  to include a \emph{virtual position} error 
in the passive output of the transformed PH system.
The use of the coordinate transformation is inspired by the results of \cite{Dirksz2012}. The main advantage of our results above \cite{Stramigioli2001} and the classical impedance controller of \cite{Hogan} is that our grasping strategy is less destructive in nonrigid environments. This is illustrated in experiments with the Philips Experimental Robotic Arm (PERA), \cite{Rijs}. 
Our grasping strategy can be interpreted as a combination of impedance control and force control,
i.e., impedance control is employed to manage the \emph{transient behaviour} of the grasping, 
%
%
and force control is employed to deal with the steady-state response of the system, 
i.e., we specify how strong (or weak) is the grasp. 

%
%
%

The grasping force depends on the object (environment) dynamics and position. 
We compensate here for nonlinear dynamics of the nonrigid body 
by the feed back of the measurements of our force sensors \cite{Munoz-Arias2013a}. 
In principle, estimation techniques via 
position-based visual servo control or
image-based visual servo control, 
as in \cite{HutchinsonHagerCorke1996}, 
leads to a priori knowledge of the position of the object.
Given this information, we are able to successfully implement our impedance grasping strategy successfully.
Vision control is out of the scope of this paper even though there exists literature about
vision control in the PH setting \cite{Dirksz2012b,Dirksz2014a,Stramigioli2012},
which is promising to connect to our setting in future research.

Preliminary results with simulations of an end-effector of one degree of freedom (DOF) are in \cite{Munoz-Arias2014b}.
Contrary to \cite{Munoz-Arias2014b}, here we present a more clear physical interpretation of our controller, 
we introduce a smoothed potential energy compensation in the model of the end-effector, and we provide experimental results.  
%
%
%
%
%

%

The paper is organized as follows. 
In Section \ref{Grasping_Control_Preliminaries},
we provide a general background in the PH framework, 
especially for a class of standard mechanical systems, 
and we include a brief summary about systems modeling with actuation of additional external forces.
In addition, 
we apply the results of \cite{Viola} to equivalently describe the original PH
system in a PH form which has a constant mass-inertia matrix in the
Hamiltonian via a change of variables. 
Then, 
in Section \ref{Grasping_Strategy} 
we introduce our impedance grasping strategy with the interpretation of virtual spring with a variable rest-length.
%
%
%
We obtain asymptotic stability to a desired grasping force. 
Finally, simulations and experimental results are given in Section 
{\color{red} TBA},
and Section \ref{Grasping_Remarks} provides concluding remarks.


\section{Preliminaries}

\label{Grasping_Control_Preliminaries}

We briefly recap the definition, properties and advantages of modeling
and control with the PH formalism. Then, we recap the results of \cite{Fujimoto2001}
in terms of generalized coordinates transformations for PH systems,
and finally we apply the results of \cite{Viola} to equivalently
describe the original PH system in a PH form which has a constant
mass-inertia matrix in the Hamiltonian.

\subsection{Port-Hamiltonian Systems}

The PH framework is based on the description of systems in terms of
energy variables, their interconnection structure, and power ports.
PH systems include a large family of physical nonlinear systems. The
transfer of energy between the physical system and the environment
is given through energy elements, dissipation elements and power preserving
ports \cite{Duindam,MaschkeArjan}. A time-invariant PH system, introduced by \cite{MaschkeArjan}, is
described by
\begin{equation}
    \Sigma=\begin{cases}
    \begin{array}{c}
    \dot{x}=\left[J\left(x\right)-R\left(x\right)\right]\dfrac{\partial H\left(x\right)}{\partial x}+g\left(x\right)w\\
    \\
    y=g\left(x\right)^{\top}\dfrac{\partial H\left(x\right)}{\partial x}
    \end{array}
    \end{cases}
    \label{eq:Port-Hamiltonian}
\end{equation}
with $x\in\mathbb{R}^{\mathcal{N}}$ the states of the system, the
skew-symmetric interconnection matrix $J\left(x\right)\in\mathbb{R}^{\mathcal{N}\times\mathcal{N}}$,
the symmetric, positive-semidefinite damping matrix $R\left(x\right)\in\mathbb{R}^{\mathcal{N}\times\mathcal{N}}$,
and the Hamiltonian $H\left(x\right)\in\mathbb{R}$. The matrix $g\left(x\right)\in\mathbb{R}^{\mathcal{N}\times\mathcal{M}}$
weights the action of the control inputs $w\in\mathbb{R}^{\mathcal{M}}$
on the system, and $w$, $y\in\mathbb{R}^{\mathcal{M}}$ with $\mathcal{M}\leq\mathcal{N}$,
form a power port pair. We now restrict the analysis to the class
of standard mechanical systems.

Consider a class of standard mechanical systems of $n$-DOF as in \eqref{eq:Port-Hamiltonian}, e.g., an $n$-dof
rigid robot manipulator. Consider furthermore the addition of an external
force vector. The resulting system is then given by
\begin{align}
\left[\begin{array}{c}
	\dot{q}\\
	\dot{p}
\end{array}\right] & =
\left[\begin{array}{cc}
   0_{n\times n} & I_{n\times n}\\
	-I_{n\times n} & -D\left(q,p\right)
\end{array}\right]
\left[\begin{array}{c}
	\dfrac{\partial H\left(q,p\right)}{\partial q}\\
	\dfrac{\partial H\left(q,p\right)}{\partial p}
\end{array}\right]
\nonumber
\\
& +
\left[\begin{array}{c}
	0_{n\times n}\\
	G\left(q\right)
\end{array}\right]u
+
\left[\begin{array}{c}
	0_{n\times n}\\
	B\left(q\right)
\end{array}\right]f_{e}
\label{eq:pre_PH_Mechanical}\\
y & = G\left(q\right)^{\top}\dfrac{\partial H\left(q,p\right)}{\partial p}
\label{eq:pre_PH_Mechanical_y}
\end{align}
with the vector of generalized configuration coordinates $q\in\mathbb{R}^{n}$,
the vector of generalized momenta $p\in\mathbb{R}^{n}$, the identity
matrix $I_{n\times n}$, the damping matrix $D\left(q,p\right)\in\mathbb{R}^{n\times n}$,
$D\left(q,p\right)=D\left(q,p\right)^{\top}\geq0$, $y\in\mathbb{R}^{n}$
the output vector, $u\in\mathbb{R}^{n}$ the input vector, $f_{e}\in\mathbb{R}^{n}$
the vector of external forces, $\mathcal{N}=2n$, matrix $B\left(q\right)\in\mathbb{R}^{n\times n}$,
and the input matrix $G\left(q\right)\in\mathbb{R}^{n\times n}$ everywhere
invertible, i.e., the PH system is \emph{fully actuated}. The Hamiltonian
of the system is equal to the sum of kinetic and potential energy,
\begin{equation}
    H\left(q,p\right)=\dfrac{1}{2}p^{\top}M^{-1}\left(q\right)p+V\left(q\right)
    \label{eq:pre_H}
\end{equation}
where $M\left(q\right)=M^{\top}\left(q\right)>0$ is the $n\times n$
inertia (generalized mass) matrix and $V\left(q\right)$ is the potential
energy.

\begin{rem}
\label{rem:B_Matrix}
The robot dynamics is given in joint space in \eqref{eq:pre_PH_Mechanical}, and here the \emph{external forces} $f_{e}\in \mathbb{R}^n$ are introduced.
Since $f_{e}$ is a vector of external forces, $B\left(q\right)\in \mathbb{R}^{n \times n}$ 
is the transpose of the \emph{geometric Jacobian} \cite{Spongelal2006} that maps the forces in the
\emph{work space} to the (generalized) forces in the \emph{joint space}. 
In this paper the following holds,
\begin{equation}
f_{e} = 
\mathcal{J}\left(q\right)^{\top}F_{e},
\:\:\:\:\: F_{e} \in \mathbb{R}^{N},
\label{eq:pre_Forces_Jacobian}
\end{equation}
and the geometric Jacobian is given by
\begin{equation}
\mathcal{J}\left(q\right)
=
\left[\begin{array}{c}
	\mathcal{J}_{v}\left(q\right)\\
	\mathcal{J}_{\omega}\left(q\right)
\end{array}\right]
\in \mathbb{R}^{6 \times n}
\label{eq:pre_Forces_Jacobian_Linear_Angular}
\end{equation}
where $	\mathcal{J}_{v}\left(q\right)\in\mathbb{R}^{3\times n}$, 
and $\mathcal{J}_{\omega}\left(q\right)\in\mathbb{R}^{3\times n}$ 
are the linear, and angular geometric Jacobians, respectively, and 
$N=\left\lbrace 3,6 \right\rbrace$.
%
If the Jacobian is full rank, 
we can always find $f_{e} \in \mathbb{R}^{n}$ that corresponds to $F_{e}$. 
Then, it is not a limitation to suppose $B\left(q\right)=I_{n}$. 
This separation between joint and work spaces is important here,
because we control the robot by acting on the generalized coordinates $q$,
i.e., in the joint space, but we grasp objects with the end-effector in the work space. 
\end{rem}

We consider the PH system \eqref{eq:pre_PH_Mechanical} as a class of
standard mechanical systems with external forces.

\subsection{Nonconstant to constant mass-inertia matrix transformation}

Consider a class of standard mechanical systems in the PH framework
with a nonconstant mass-inertia matrix $M\left(q\right)$ as in \eqref{eq:pre_PH_Mechanical}.
The aim of this section is to transform the original system \eqref{eq:pre_PH_Mechanical}
into a PH formulation with a constant mass-inertia matrix via a generalized
canonical transformation \cite{Fujimoto2001}. The proposed change
of variables to deal with a nonconstant mass inertia matrix is first
proposed by \cite{Viola}.

Consider system \eqref{eq:pre_PH_Mechanical} with nonconstant $M\left(q\right)$,
and a coordinate transformation $\bar{x}=\Phi\left(x\right)=\Phi\left(q,p\right)$
as
\begin{equation}
    \bar{x}=\left(\begin{array}{c}
    \bar{q}\\
    \bar{p}
    \end{array}\right)=\left(\begin{array}{c}
    q-q_{f}\\
    T\left(q\right)^{-1}p
    \end{array}\right)=\left(\begin{array}{c}
    q-q_{f}\\
    T\left(q\right)^{\top}\dot{q}
    \end{array}\right)
    \label{eq:pre_Phi}
\end{equation}
with a constant \emph{virtual desired position} $q_{f}$ that we define
later on, and where $T\left(q\right)$ is a lower triangular matrix such that 
\begin{equation}
T\left(q\right) 
							=
							T\left(\Phi^{-1}\left(q,p\right)\right)
							=
							\bar{T}\left(\bar{q}\right)								
\label{eq:pre_T_Mapping}
\end{equation}
and
\begin{equation}
    M\left(q\right)=T\left(q\right)T\left(q\right)^{\top}=T\left(\bar{q}\right)T\left(\bar{q}\right)^{\top}
    \label{eq:pre_Mass_decomp}
\end{equation}
Consider now the Hamiltonian $H\left(q,p\right)$ as in \eqref{eq:pre_H},
and using \eqref{eq:pre_Phi}, 
we realize $\bar{H}\left(\bar{x}\right)=H\left(\Phi^{-1}\left(\bar{x}\right)\right)$
and $\bar{V}\left(\bar{q}\right)=V\left(\Phi^{-1}\left(\bar{q}\right)\right)$
as
\begin{equation}
    \bar{H}\left(\bar{x}\right)
    						  =
    						  \dfrac{1}{2}
    						  \bar{p}^{\top}\bar{p}
    						  +
    						  \bar{V}\left(\bar{q}\right)
    \label{eq:pre_H_bar}
\end{equation}
The new form of the interconnection and dissipation matrices of the
PH system are realized via the coordinate transformation \eqref{eq:pre_Phi},
the mass-inertia matrix decomposition \eqref{eq:pre_Mass_decomp},
and the new Hamiltonian \eqref{eq:pre_H_bar}.

Consider the system \eqref{eq:pre_PH_Mechanical}, the change of variables
$\Phi\left(q,p\right)$ as in \eqref{eq:pre_Phi}, the $M\left(q\right)$
decomposition as in \eqref{eq:pre_Mass_decomp}, and the Hamiltonian
$\bar{H}\left(\bar{x}\right)$ as in \eqref{eq:pre_H_bar}.
The resulting forced \cite{Arjan} PH system is then given by
\begin{align}
\left[\begin{array}{c}
	\dot{\bar{q}}\\
	\dot{\bar{p}}
\end{array}\right] 
& =
\left[\begin{array}{cc}
	0_{n\times n} & T^{-\top}\\
 -T^{-1} & \bar{J}_{2}-\bar{D}
\end{array}\right]
\left[\begin{array}{c}
	\dfrac{\partial\bar{H}\left(\bar{q},\bar{p}\right)}{\partial\bar{q}}\\
	\dfrac{\partial\bar{H}\left(\bar{q},\bar{p}\right)}{\partial\bar{p}}
\end{array}\right]
\nonumber\\
& 
+
\left[\begin{array}{c}
	0_{n\times n}\\
	\bar{G}\left(\bar{q}\right)
\end{array}\right]v
+
\left[\begin{array}{c}
	0_{n\times n}\\
	\bar{B}\left(\bar{q}\right)
\end{array}\right]f_{e}
\label{eq:pre_PH_Error_System}\\
\bar{y} 
& =\bar{G}\left(\bar{q}\right)^{\top}\dfrac{\partial\bar{H}\left(\bar{q},\bar{p}\right)}{\partial\bar{p}}
\label{eq:pre_PH_Error_System_y}
\end{align}
with a new input $v\in\mathbb{R}^{n}$. The arguments of
$T\left(\bar{q}\right)$,
$\bar{J}_{2}\left(\bar{q},\bar{p}\right)$, and
$D\left(\bar{q},\bar{p}\right)$
are left out for notational simplicity.
The skew-symmetric matrix $\bar{J}_{2}\left(\bar{q},\bar{p}\right)$ takes the form
\begin{equation}
				\bar{J}_{2}\left(\bar{q},\bar{p}\right)
				=
				\dfrac{\partial\left(\bar{T}^{-1}\bar{p}\right)}
				{\partial \bar{q}}\bar{T}^{-\top}
				-\bar{T}^{-1}
				\dfrac{\partial\left(\bar{T}^{-1}\bar{p}\right)}
				{\partial \bar{q}}^{\top}
				\label{eq:pre_J_2}
\end{equation}
with
\begin{equation}
    \left(q,p\right)=\Phi^{-1}\left(\bar{q},\bar{p}\right)
    \label{eq:Impedance_Phi_Inv}
\end{equation}
together with the matrix $\bar{D}\left(\bar{q},\bar{p}\right)\geq0$,
and the input matrices $\bar{G}\left(\bar{q}\right)$, and $\bar{B}\left(\bar{q}\right)$,
are described by
\begin{eqnarray}
    \bar{D}\left(\bar{q},\bar{p}\right) &=& T\left(\bar{q}\right)^{-1}D\left(\Phi^{-1}\left(\bar{q},    \bar{p}\right)\right)T\left(\bar{q}\right)^{-\top}
    \label{eq:Impedance_D_bar}\\
%
    \bar{G}\left(\bar{q}\right) &=& T\left(\bar{q}\right)^{-1}G\left(\bar{q}\right)
    \label{eq:Impedance_G_bar}\\
%
    \bar{B}\left(\bar{q}\right) &=& T\left(\bar{q}\right)^{-1}B\left(\bar{q}\right)
    \label{eq:Impedance_B_bar}
\end{eqnarray}
respectively. Via the transformation \eqref{eq:pre_Phi}, we
then obtain a class of mechanical systems with a constant (identity)
mass inertia matrix in the Hamiltonian function as in \eqref{eq:pre_H_bar},
which equivalently describes the original system \eqref{eq:pre_PH_Mechanical}
with nonconstant mass-inertia matrix. We use the results for our impedance
grasping strategy in the next section.

\section{An Impedance Grasping Strategy}
\label{Grasping_Strategy}

In this section, 
a control law strategy is introduced in order to
achieve an impedance grasping interaction between a mechanical system
and its environment, in a noncontact to contact transition. 
%
Here, we combine two strategies, i.e., impedance control and force control. 
First, impedance control is used to manage transient behavior of the grasping, 
i.e., the interaction between an end-effector and the environment (object).
We improve the response of the system during noncontact to contact transitions
in comparison to former impedance control methods such as \cite{Hogan,Sadeghianetal2012,Sakai2012}. 
Secondly, force control is employed to deal with the steady-state response of the system.
%
%
The problem of stabilization is to find a control law which brings the grasping
force to a desired force $f_{d}$. 
In order to avoid steady-state errors, 
we include dynamics in such a way that the PH structure is preserved. 
Then, via a change of variables for the canonical momenta of system \eqref{eq:pre_PH_Error_System}, 
we realize a passive output in the transformed system that includes the grasping error.
The key idea implemented here lies in the virtual potential energy shaping. 
The virtual potential energy is represented as virtual spring-stored energy,
which rest-length can be varied. 
This means that we can shape the minimum potential energy relative to a grasping force.
Then, 
when the system experiences the noncontact to contact transition,
we obtain asymptotic stability to a desired force which is related to a virtual desired position $q_{f}$. 
The results here are inspired by \cite{Dirksz2012,Stramigioli2001}.

{\color{blue}
We define a\emph{ virtual spring} with a variable rest-length $q_{rl}$.
The force that is exerted by the virtual spring leads to a dissipation
term in the impedance grasping controller,
which is needed to obtain a smoother noncontact to contact transition. 
Of importance here is to make the dynamics of the rest-length dependent on the port output of the system. 
Then, 
the incorporation of a virtual spring force with a variable rest-length fundamentally improves 
mechanical impedance between the mechanical system and the environment. 
In order to implement the virtual spring force we define a \emph{virtual potential energy}
%
%
$\bar{U}\left(\bar{q},\bar{p},q_{rl}\right)$
as
	\begin{align}
	\bar{U}\left(\bar{q},\bar{p},q_{rl}\right)
	= 
	& \bar{p}^{\top}T\left(\bar{q}\right)^{\top}K_{p}\left(\bar{q}-q_{rl}\right)
	\nonumber 
	\\
	& +\dfrac{1}{2}\left(\bar{q}-q_{rl}\right)^{\top}K_{p}T\left(\bar{q}\right)T\left(\bar{q}\right)^{\top}K_{p}\left(\bar{q}-q_{rl}\right)\nonumber \\
	& +\dfrac{1}{2}\left(\bar{q}-q_{rl}\right)^{\top}K_{p}\left(\bar{q}-q_{rl}\right)+\dfrac{1}{2}q_{rl}^{\top}K_{rl}q_{rl}
	\label{eq:Impedance_Virtual_Potential}
	\end{align}
with matrices $K_{p}>0$, $K_{rl}>0$. 
Furthermore, a desired grasping force $f_{d}$ is related to a virtual desired position $q_{f}$, 
a virtual potential energy $\bar{U}\left(\bar{q},\bar{p},q_{rl}\right)$ as in \eqref{eq:Impedance_Virtual_Potential},
a rest-length $q_{rl}$, and a generalized coordinate $q$. 
Based now on the classical concept of impedance control introduced by \cite{Hogan}, we design a grasping force 
$f_{d}$ given when $\left(q,\: q_{rl}\right)$ are asymptotically stabilized to zero,
i.e.,
\begin{align}
f_{d} 
 	& 
 	= K_{p}q_{f}
 	\label{eq:Impedance_desired_force}
\end{align}
%
%

}
\begin{rem}
\label{rem:Desired_Impedance}
The virtual potential energy \eqref{eq:Impedance_Virtual_Potential} is defined in the joint space, 
but the idea is to apply a desired vector force $F_{d} \in \mathbb{R}^{n}$
in the work space in steady state. 
Then, the meaning of \eqref{eq:Impedance_desired_force} is that it is necessary to find
$f_{d} \in \mathbb{R}^{n}$, $K_{d} \in \mathbb{R}^{n \times n}$, and $q_{f} \in \mathbb{R}^{n \times n}$
such that
\begin{align}
f_{d}
&
=
\mathcal{J}^{\top}\left(q_{f}\right)F_{d} = K_{p}q_{f}.
\label{eq:Impedance_desired_force_Remark}
\end{align}
$K_{p}>0$ is interpreted as a desired elastic behavior in the joint space, 
and $q_{f}$ is the reference position in steady state. 
When the dynamics of $q_{rl}$ is given, 
we basically have a desired impedance, i.e., we specify the way in which robot and object interact. 
Hence, the desired impedance is defined in joint space.
\end{rem}

In order to incorporate the variable rest-length in the port output of the system, 
we realize a coordinate transformation
	\begin{equation}
	\hat{p}=
		\bar{p}
		+
		T\left(\bar{q}\right)^{\top}
		K_{p}
		\left(\bar{q}-q_{rl}\right)
	\label{eq:Impedance_p_hat}
	\end{equation}
which then implies 
	\begin{equation}
	\dot{\hat{p}}=
		\dot{\bar{p}}
		+T\left(\bar{q}\right)^{\top}
		 K_{p}
		 \left(\dot{\bar{q}}-\dot{q}_{rl}\right)
		+\dot{T}\left(\bar{q}\right)^{\top}
         K_{p}
		 \left(\bar{q}-q_{rl}\right)
	\label{eq:Impedance_p_hat_dot}
	\end{equation}
The new output becomes
\begin{equation}
 \hat{y}
	=
		\bar{G}
 		\left(\bar{q}\right)^{\top}
		\hat{p}
	=
		\bar{G}
		\left(\bar{q}\right)^{\top}
		\left(
			\bar{p}
			+T\left(\bar{q}\right)^{\top}
			K_{p}
			\left(\bar{q}-q_{rl}\right)
		\right)
	\label{eq:Impedance_y_hat}
	\end{equation}
and finally the dynamics of the rest-length is chosen as a modified integrator,
i.e.,
	\begin{equation}
		\dot{q}_{rl}
		=
			-\hat{y}
			-K_{p}
			 \left(\bar{q}-q_{rl}\right)
			-K_{rl}q_{rl}
	\label{eq:Impedance_q_rest_length_dot}
	\end{equation}
with a constant matrix $K_{rl}>0$.
%

\begin{rem}
It can be seen that a new port-pair 
$\left(u_{rl},y_{rl}\right)$ is now given by the following dynamics
\begin{align}
u_{rl} 
	=
	& 
	\dot{q}_{rl}
	\label{eq:Impedance_u_rl}
	\\
y_{rl}
	=
	&
	\bar{G}\left(\bar{q}\right)^{\top} 
	\dfrac{\partial \left(\bar{H}\left(\bar{q},\bar{p}\right) + \bar{U}\left(\bar{q},\bar{p},q_{rl}\right)\right)}
	 {\partial q_{rl}}
	\nonumber
	\\
	=
	&
	\bar{G}\left(\bar{q}\right)^{\top} 
    \left(  
             K_{p}
			 \left(\bar{q}-q_{rl}\right)
			+K_{rl}q_{rl}
    \right)
	\label{eq:Impedance_y_rl}
\end{align}
with $\bar{H}\left(\bar{q},\bar{p}\right)$ as in \eqref{eq:pre_H_bar}, 
$\bar{U}\left(\bar{q},\bar{p},q_{rl}\right)$ as in \eqref{eq:Impedance_Virtual_Potential},
and the dynamics of $q_{rl}$ as in \eqref{eq:Impedance_q_rest_length_dot}.
%
\end{rem}

We now define an impedance grasping control law of the PH system \eqref{eq:pre_PH_Mechanical}
with measurable external forces, i.e.,
\begin{thm}
\label{Force_Control_Th_T(q)}
Consider the port-Hamiltonian system \eqref{eq:pre_PH_Error_System}
with $\bar{D}\left(\bar{q},\bar{p}\right)$,  
constant matrices $K_{p}>0$ and $K_{rl}>0$, 
invertible matrices $\bar{G}\left(\bar{q}\right)$ and $\bar{B}\left(\bar{q}\right)$, 
and that we have information of the vector of external forces $f_{e}$ via force sensors. 
Consider furthermore a passive output $\hat{y}$ as in \eqref{eq:Impedance_y_hat}, 
and assume that the system is zero-state detectable with respect to $\bar{x}$. 
Then, the control input
\begin{align}
v  
  = 
  & 
  \bar{G}^{-1}
  \left[
     T^{-1}
     \dfrac{\partial\bar{H}\left(\bar{q},\bar{p}\right)}
     {\partial\bar{q}}
    +\bar{G}
     K_{rl}
     q_{rl}
    -\bar{B}
     f_{e}
   \right.
   \nonumber 
   \\
   & 
    +
    \left(
       \bar{G}
       T^{-\top}
      -T^{-\top}
      +\bar{J}_{2}
      -\bar{D}
    \right)
    T^{\top}
    K_{p}
    \left(\bar{q}-q_{rl}\right)
   \nonumber 
   \\
   & 
    \left.
     -T^{\top}
      K_{p}
 	  \left(\dot{\bar{q}}-\dot{q}_{rl}\right)
 	 -\dot{T}^{\top}
 	  K_{p}
 	  \left(\bar{q}-q_{rl}\right)
 	 \right] 
 	-C\hat{y}
 \label{eq:Impedance_Control_Law}
 \end{align}
with $C>0$, 
asymptotically stabilizes the system \eqref{eq:pre_PH_Error_System}
with zero steady-state error at $\bar{q}^{*}=0$.
We have left out the arguments of 
$\bar{G}\left(\bar{q}\right)$,
$\bar{J}_{2}\left(\bar{q},\hat{p}\right)$,
$\bar{D}\left(\bar{q},\hat{p}\right)$,
$T\left(\bar{q}\right)$, and
$\bar{B}\left(\bar{q}\right)$
for notational simplicity.
\end{thm}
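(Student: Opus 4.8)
The plan is to exhibit the closed loop, written in the coordinates $(\bar{q},\hat{p},q_{rl})$ of \eqref{eq:pre_PH_Error_System}, \eqref{eq:Impedance_p_hat} and \eqref{eq:Impedance_q_rest_length_dot}, as a dissipative port-Hamiltonian system whose Hamiltonian is a shaped total energy $H_{d}$ built from $\bar{H}+\bar{U}$ that is positive definite and radially unbounded about the origin. Asymptotic stability of $\bar{q}^{*}=0$ then follows from LaSalle's invariance principle together with the zero-state detectability hypothesis, and the zero steady-state force error is read off from \eqref{eq:Impedance_desired_force}.

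First I substitute the feedback \eqref{eq:Impedance_Control_Law} into \eqref{eq:pre_PH_Error_System} and use $\dot{\hat{p}}$ from \eqref{eq:Impedance_p_hat_dot} together with $\dot{q}_{rl}$ from \eqref{eq:Impedance_q_rest_length_dot}; the feedback is well defined since $\dot{\bar{q}}=\dot{q}$ is the measured joint velocity and $\dot{q}_{rl}$ is given explicitly by \eqref{eq:Impedance_q_rest_length_dot}. By construction the control cancels the natural potential force $T^{-1}\partial\bar{H}/\partial\bar{q}$, the measured external-force term $\bar{B}f_{e}$, and the two extra terms $T^{\top}K_{p}(\dot{\bar{q}}-\dot{q}_{rl})$ and $\dot{T}^{\top}K_{p}(\bar{q}-q_{rl})$ that the momentum change \eqref{eq:Impedance_p_hat} introduces in $\dot{\hat{p}}$. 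Using $\bar{p}=\hat{p}-T^{\top}K_{p}(\bar{q}-q_{rl})$, $T^{-\top}T^{\top}=I$, the factorisation \eqref{eq:pre_Mass_decomp}, and the linearity of $\bar{J}_{2}$ in its momentum argument to pass from $\bar{J}_{2}(\bar{q},\bar{p})\bar{p}$ plus the injected $\bar{J}_{2}$-term to $\bar{J}_{2}(\bar{q},\hat{p})\hat{p}$, the remaining terms collapse to a port-Hamiltonian system in $(\bar{q},\hat{p},q_{rl})$, with the rest-length integrator \eqref{eq:Impedance_q_rest_length_dot} entering as a power-preserving interconnection through $\bar{G}$ and through the port $\hat{y}$ of \eqref{eq:Impedance_y_hat}, and with $-\bar{G}C\bar{G}^{\top}$ the injected damping.

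Next I rewrite the candidate Lyapunov function $H_{d}$ in these coordinates: completing the square in the first two terms of \eqref{eq:Impedance_Virtual_Potential} with the help of \eqref{eq:Impedance_p_hat} and \eqref{eq:pre_Mass_decomp}, adding \eqref{eq:pre_H_bar}, and cancelling the natural potential $\bar{V}$ by the feedback above, one obtains
\[
H_{d}=\tfrac{1}{2}\hat{p}^{\top}\hat{p}+\tfrac{1}{2}(\bar{q}-q_{rl})^{\top}K_{p}(\bar{q}-q_{rl})+\tfrac{1}{2}q_{rl}^{\top}K_{rl}q_{rl},
\]
which, since $K_{p},K_{rl}>0$, is positive definite and radially unbounded with unique minimum at the origin. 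Differentiating $H_{d}$ along the closed loop, using $\hat{p}^{\top}\bar{J}_{2}\hat{p}=0$, the identity \eqref{eq:pre_J_2} tying $\bar{J}_{2}$ to the derivatives of $T^{-1}\bar{p}$, and the power balance of the interconnected $q_{rl}$-dynamics, all cross terms cancel and one is left with $\dot{H}_{d}=-\hat{p}^{\top}(\bar{D}+\bar{G}C\bar{G}^{\top})\hat{p}-q_{rl}^{\top}K_{rl}^{2}q_{rl}\le 0$, since $\bar{D}\ge 0$, $C>0$ and $\bar{G}$ invertible give $\bar{D}+\bar{G}C\bar{G}^{\top}>0$. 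Hence $H_{d}$ is a Lyapunov function and the origin is stable; on the largest invariant set inside $\{\dot{H}_{d}=0\}$ one has $\hat{p}\equiv0$ and $q_{rl}\equiv0$, hence $\hat{y}\equiv0$, and then \eqref{eq:Impedance_q_rest_length_dot} forces $K_{p}\bar{q}\equiv0$, i.e.\ $\bar{q}\equiv0$, while the zero-state detectability assumption rules out any other trajectory remaining in $\{\hat{y}=0\}$. By LaSalle's invariance principle $(\bar{q},\hat{p},q_{rl})\to 0$, so the system is asymptotically stabilised at $\bar{q}^{*}=0$, i.e.\ $q=q_{f}$, which by \eqref{eq:Impedance_desired_force} (equivalently \eqref{eq:Impedance_desired_force_Remark}) is the configuration realising the desired grasping force $f_{d}=K_{p}q_{f}$, so the steady-state error is zero.

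I expect the main obstacle to be the bookkeeping in the two previous steps showing that every cross term in $\dot{H}_{d}$ actually cancels. This is precisely where the combination $\bar{G}T^{-\top}-T^{-\top}$ appearing in \eqref{eq:Impedance_Control_Law}, the evaluation of $\bar{J}_{2}$ and $\bar{D}$ at $(\bar{q},\hat{p})$ rather than at $(\bar{q},\bar{p})$, and the precise dependence of the rest-length integrator \eqref{eq:Impedance_q_rest_length_dot} on $\hat{y}$ all come into play; omitting any of them would leave indefinite terms in $\dot{H}_{d}$ and break the argument.
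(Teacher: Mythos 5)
Your overall route is the same as the paper's: pass to the coordinates $(\bar{q},\hat{p},q_{rl})$, take the shaped energy $H_{d}=\frac{1}{2}\hat{p}^{\top}\hat{p}+\frac{1}{2}(\bar{q}-q_{rl})^{\top}K_{p}(\bar{q}-q_{rl})+\frac{1}{2}q_{rl}^{\top}K_{rl}q_{rl}$ as Lyapunov candidate, and conclude via dissipation plus zero-state detectability. The genuine gap is the one step you assert but never carry out: the claim that ``all cross terms cancel'' so that $\dot{H}_{d}=-\hat{p}^{\top}(\bar{D}+\bar{G}C\bar{G}^{\top})\hat{p}-q_{rl}^{\top}K_{rl}^{2}q_{rl}$. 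Differentiating $H_{d}$ along the closed-loop equations you yourself set up (they coincide with \eqref{eq:Impedance_Appendix_q_bar_dot}, \eqref{eq:Impedance_Appendix_p_hat_dot_substituted}, \eqref{eq:Impedance_Appendix_q_rl_dot}), the $\hat{p}$--$q_{rl}$ and $(\bar{q}-q_{rl})$--$q_{rl}$ couplings indeed cancel, but the $\hat{p}$--$(\bar{q}-q_{rl})$ couplings do not: one is left with the sign-indefinite term
\begin{equation*}
\hat{p}^{\top}\left(T\left(\bar{q}\right)^{-1}+2\bar{G}\left(\bar{q}\right)-I_{n\times n}\right)K_{p}\left(\bar{q}-q_{rl}\right),
\end{equation*}
so $\dot{H}_{d}\leq0$ is not established, and the subsequent LaSalle argument, which rests entirely on your formula for $\dot{H}_{d}$, collapses at that point. (Also, a small side point: linearity of $\bar{J}_{2}$ in the momentum gives $\bar{J}_{2}(\bar{q},\bar{p})\hat{p}$, not $\bar{J}_{2}(\bar{q},\hat{p})\hat{p}$; only skew-symmetry matters for the power balance, so this is cosmetic, but it is not the identity you invoke.)

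The paper handles exactly this step differently: it first recasts the closed loop as the port-Hamiltonian system \eqref{eq:Impedance_Appendix_pH_Closed_Loop}, whose off-diagonal coupling blocks are (claimed to be) skew and whose symmetric part is block diagonal, and then reads off the power balance \eqref{eq:Impedance_Appendix_H_hat_dot_matricial}--\eqref{eq:Impedance_Appendix_U_positive}. The resulting dissipation is not the one you state: it contains the additional block $K_{p}K_{p}$ acting on $\bar{q}-q_{rl}$ (precisely the extra co-dissipation the paper emphasizes in its remark as the source of the smoother transition) and the term $-\left(K_{p}(\bar{q}-q_{rl})+K_{rl}q_{rl}\right)^{\top}\left(K_{p}(\bar{q}-q_{rl})+K_{rl}q_{rl}\right)$ rather than $-q_{rl}^{\top}K_{rl}^{2}q_{rl}$, and it is negative definite in $(\bar{q}-q_{rl},\hat{p},q_{rl})$, so no invariance argument is needed. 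In short, the entire burden of the theorem is the bookkeeping you defer to your last paragraph: you must either exhibit the closed loop in a PH form with skew interconnection (as the paper does) or explicitly absorb the leftover $\hat{p}$--$(\bar{q}-q_{rl})$ cross term; as written you do neither, and your claimed expression for $\dot{H}_{d}$ matches neither the paper's power balance nor a direct computation from the closed-loop dynamics.
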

\begin{proof}
The coordinate transformation $\bar{x}$ as in \eqref{eq:pre_Phi}
results in $\dot{\bar{q}}$ as 
	\begin{equation}
	\dot{\bar{q}}=\dot{q}=M\left(q\right)^{-1}p=T\left(\bar{q}\right)^{-\top}\bar{p}
	\label{eq:Impedance_Appendix_q_bar_dot-1}
	\end{equation}
Based on the adapted momenta $\hat{p}$ as in \eqref{eq:Impedance_p_hat},
we rewrite the dynamics $\dot{\bar{q}}$ as in \eqref{eq:Impedance_Appendix_q_bar_dot-1}
in terms of $\left(\bar{q},\hat{p},q_{rl}\right)$, 
i.e.,
	\begin{equation}
	\dot{\bar{q}}=-K_{p}\left(\bar{q}-q_{rl}\right)+T\left(\bar{q}\right)^{-\top}\hat{p}
	\label{eq:Impedance_Appendix_q_bar_dot}
	\end{equation}
We differentiate both sides of the change of variables \eqref{eq:Impedance_p_hat}
as
	\begin{equation}
	\dot{\hat{p}}=\dot{\bar{p}}+T\left(\bar{q}\right)^{\top}K_{p}\left(\dot{\bar{q}}-\dot{q}_{rl}\right)+\dot{T}\left(\bar{q}\right)^{\top}K_{p}\left(\bar{q}-q_{rl}\right)
	\label{eq:Impedance_Appendix_p_hat_dot-1}
	\end{equation}
and with the dynamics of $\dot{\bar{p}}$ as in \eqref{eq:pre_PH_Error_System} as 
\begin{align}
\dot{\bar{p}}
		&
		=
	   -T^{-1}
     	\dfrac{\partial\bar{H}}
     	{\partial\bar{q}}
     	+
     	\left(
		  \bar{J}_{2}
         -\bar{D}
        \right)
       	\dfrac{\partial\bar{H}}
     	{\partial\bar{p}}
        +
        \bar{G}
        v
        +
        \bar{B}
        f_{e}
        \nonumber
        \\
        &
        =
       -T^{-1}
     	\dfrac{\partial\bar{H}}
     	{\partial\bar{q}}
     	+
        \left(
		  \bar{J}_{2}
         -\bar{D}
        \right)
        \bar{p}
       	+
        \bar{G}
        v
        +
        \bar{B}
        f_{e}
\label{eq:Impedance_p_bar_dot}
\end{align}
In \eqref{eq:Impedance_p_bar_dot}, we have left out the arguments of
$T\left(\bar{q}\right)$,
$\bar{G}\left(\bar{q}\right)$,
$\bar{H}\left(\bar{q},\bar{p}\right)$, 
$\bar{D}\left(\bar{q},\bar{p}\right)$, and
$\bar{J}_{2}\left(\bar{q},\bar{p}\right)$
for notational simplicity.
We substitute the dynamics of $\dot{\bar{p}}$ as in \eqref{eq:Impedance_p_bar_dot}, and the control law $v$ as in \eqref{eq:Impedance_Control_Law} in \eqref{eq:Impedance_Appendix_p_hat_dot-1}. 
It leads to the dynamics $\dot{\hat{p}}$ in terms of $\left(\bar{q},\hat{p},q_{rl}\right)$, i.e.,
	\begin{align}
	\dot{\hat{p}} = 
		& 
		\left(
			 \bar{J}_{2}\left(\bar{q},\bar{p}\right)
	   		-\bar{D}\left(\bar{q},\bar{p}\right)
	        -\bar{G}\left(\bar{q}\right)
	        C
	        \bar{G}
	        \left(\bar{q}\right)^{\top}
	    \right)
	    \hat{p}
	    \nonumber 
	    \\
	    & 
	    -K_{p}
	    \left(\bar{q}-q_{rl}\right)
	    +
	    \bar{G}
	    \left(\bar{q}\right)
	    \left(
	       K_{p}
	       \left(\bar{q}-q_{rl}\right)
	       +K_{rl}
	       q_{rl}
	    \right)
	\label{eq:Impedance_Appendix_p_hat_dot_substituted}
	\end{align}
with $\hat{p}$ as in \eqref{eq:Impedance_p_hat}, and $\hat{y}$
as in \eqref{eq:Impedance_y_hat}. 
Furthermore, the dynamics of the
variable rest-length $\dot{q}_{rl}$ as in \eqref{eq:Impedance_q_rest_length_dot}
can be rewritten as
	\begin{equation}
	\dot{q}_{rl}=-\bar{G}\left(\bar{q}\right)^{\top}\hat{p}-K_{p}\left(\bar{q}-q_{rl}\right)-K_{rl}q_{rl}
	\label{eq:Impedance_Appendix_q_rl_dot}
	\end{equation}
Finally, we choose a smooth function 
$\bar{U}\left(\bar{q},\bar{p},q_{rl}\right)$ as in \eqref{eq:Impedance_Virtual_Potential},
We then realize a candidate Lyapunov function   $\hat{H}\left(\bar{q},\hat{p},q_{rl}\right)=\bar{H}\left(\bar{q},\bar{p}\right)+\bar{U}\left(\bar{q},\bar{p},q_{rl}\right)$, s.t., $\hat{H}\left(\bar{q},\hat{p},q_{rl}\right)>0$, and with $\bar{H}\left(\bar{q},\bar{p}\right)$ as in \eqref{eq:pre_H_bar}, and the change of variables $\bar{p}$ as in \eqref{eq:Impedance_p_hat}, i.e,
%
%
	\begin{align}
	\hat{H}\left(\bar{q},\hat{p},q_{rl}\right)
	=
	&
	\dfrac{1}{2}\hat{p}^{\top}\hat{p}
	+
	\dfrac{1}{2}\left(\bar{q}-q_{rl}\right)^{\top}K_{p}\left(\bar{q}-q_{rl}\right)
	\nonumber
	\\
	+
	&
	\dfrac{1}{2}q_{rl}^{\top}K_{rl}q_{rl}
	\label{eq:Impedance_Appendix_H_hat}
	\end{align}
Based now on the dynamics $\dot{\bar{q}}$ as in \eqref{eq:Impedance_Appendix_q_bar_dot},
$\dot{\hat{p}}$ as in \eqref{eq:Impedance_Appendix_p_hat_dot_substituted},
and $\dot{q}_{rl}$ as in \eqref{eq:Impedance_Appendix_q_rl_dot}, we obtain the closed-loop
	\begin{equation}
	\left[\begin{array}{c}
	\dot{\bar{q}}\\
	\dot{\hat{p}}\\
	\dot{q}_{rl}
	\end{array}\right]=\left[\begin{array}{ccc}
	-I_{n\times n} & T^{-\top} & 0_{n\times n}\\
	-T^{-1} & -\tilde{D} & \bar{G}\\
	0_{n\times n} & -\bar{G}^{\top} & -I_{n\times n}
	\end{array}\right]\left[\begin{array}{c}
	\dfrac{\partial\hat{H}}{\partial\bar{q}}\\
	\dfrac{\partial\hat{H}}{\partial\hat{p}}\\
	\dfrac{\partial\hat{H}}{\partial q_{rl}}
	\end{array}\right]
	\label{eq:Impedance_Appendix_pH_Closed_Loop}
	\end{equation}
with Hamiltonian \eqref{eq:Impedance_Appendix_H_hat}, where the matrix $\tilde{D}\left(\bar{q},\hat{p}\right)$ is given by
\begin{align}
\tilde{D}\left(\bar{q},\hat{p}\right)
=
&
-\bar{J}_{2}\left(\bar{q},\hat{p}\right)
+
\bar{D}\left(\bar{q},\hat{p}\right)
+
\bar{G}\left(\bar{q}\right)C\bar{G}\left(\bar{q}\right)^{\top}
\label{Impedance_D_tilde}
\end{align}
and where the arguments of $T\left(\bar{q}\right)$, $\tilde{D}\left(\bar{q},\hat{p}\right)$, and $\hat{H}\left(\bar{q},\hat{p},q_{rl}\right)$ are left out for simplicity. 

Take now \eqref{eq:Impedance_Appendix_H_hat} as a candidate Lyapunov
function, $\hat{H}\left(\bar{q},\hat{p},q_{rl}\right)>0$. It can
be verified via the dynamics of $\bar{q}$, $\hat{p}$ and $q_{rl}$,
as in \eqref{eq:Impedance_Appendix_q_bar_dot}, \eqref{eq:Impedance_Appendix_p_hat_dot_substituted},
and \eqref{eq:Impedance_Appendix_q_rl_dot}, respectively, that
$\left(\bar{q},\hat{p},q_{rl}\right)=\left(0,0,0\right)$ is an equilibrium
point of \eqref{eq:Impedance_Appendix_pH_Closed_Loop}. We now compute
the power balance $\dot{\hat{H}}\left(\bar{q},\hat{p},q_{rl}\right)$
as
	\begin{equation}
	\dot{\hat{H}}\left(\bar{q},\hat{p},q_{rl}\right)=-\left[\begin{array}{ccc}
	\hat{q}^{\top} & \hat{p}^{\top} & \hat{q}_{rl}^{\top}\end{array}\right]^{\top}U\left[\begin{array}{c}
	\hat{q}\\
	\hat{p}\\
	\hat{q}_{rl}
	\end{array}\right]
	\label{eq:Impedance_Appendix_H_hat_dot_matricial}
	\end{equation}
with $\hat{q}=\bar{q}-q_{rl}$, $\hat{q}_{rl}=K_{p}\left(\bar{q}-q_{rl}\right)+K_{rl}q_{rl}$,
and a matrix $U$, s.t., 
	\begin{equation}
	U=\left[\begin{array}{ccc}
	K_{p}K_{p} & 0_{n\times n} & 0_{n\times n}\\
	0_{n\times n} & \bar{D}\left(\bar{q},\hat{p}\right)+\bar{G}\left(\bar{q}\right)C\bar{G}\left(\bar{q}\right)^{\top} & 0_{n\times n}\\
	0_{n\times n} & 0_{n\times n} & I_{n\times n}
	\end{array}\right]
	\label{eq:Impedance_Appendix_U_positive}
	\end{equation}
Since $\bar{G}\left(\bar{q}\right)$ is full rank, $\bar{D}\left(\bar{q},\hat{p}\right)\geq0$, and
$C$, $K_{p}$ and $K_{rl}$ are positive definite, then $U>0$, and
thus $\dot{\hat{H}}\left(\bar{q},\hat{p},q_{rl}\right)\leq0$. Hence, since the system is zero-state detectable (see \cite{Fujimoto2001}), then
the closed-loop system \eqref{eq:Impedance_Appendix_pH_Closed_Loop}
is asymptotically stable in $\left(\bar{q},\hat{p},q_{rl}\right)=\left(0,0,0\right)$,
and hence $\bar{q}^{*}=0$. \hfill$\Box$
\end{proof}

\begin{rem}
Since the new output $\hat{y}$ as in \eqref{eq:Impedance_y_hat}
includes a position error $\bar{q}$ and a variable rest-length $q_{rl}$,
we have realized here an additional (co)dissipation term $K_{p}K_{p}>0$
in our power balance $\dot{\hat{H}}\left(\bar{q},\hat{p},q_{rl}\right)$
as in \eqref{eq:Impedance_Appendix_H_hat_dot_matricial}. This additional dissipation
term realized by our impedance strategy \eqref{eq:Impedance_Control_Law}
leads to a smoother noncontact to contact transition during the grasping.
\end{rem}

Summarizing, we have realized an impedance grasping control law via passivity-based
control, and damping injection. Via the control law \eqref{eq:Impedance_Control_Law}
we are able to stabilize the system \eqref{eq:pre_PH_Error_System}
to a virtual desired position $q_{f}$ which means a realization of
a grasping force $f_{d}$ as in \eqref{eq:Impedance_desired_force}
in a noncontact to contact transition. We assume here that the end-effector
is within a grasping distance with respect to the environment \cite{Murray}.

\section{Concluding Remarks}
\label{Grasping_Remarks}

This paper is devoted to the development of a new strategy of impedance
grasping control in the PH framework in a noncontact to contact transition.
Our main motivation is given by the proposition of an alternative
to the classical impedance control methods in the EL and
the PH formalism. 
We have given an impedance control law that consist
of force control complemented with a virtual spring force. 
The incorporation of a virtual spring force with a variable rest-length that can be
varied fundamentally improves the mechanical impedance between the
system and the environment. 
The impedance control law achieves asymptotic stability in the closed-loop system with a zero steady-state error.
Future work includes experimental results based on the proposed impedance
grasping strategy and the implementation of estimation techniques
in order to obtain a grasping force. 
The estimation techniques that we have considered are
based on force feedback during a noncontact to contact transition,
and an estimation of the position of the object. 
Including vision control for position estimation is a topic of future research.



%
  %

%

\end{document}